\documentclass{article}

\usepackage[main, preprint]{neurips_2026}

\usepackage[utf8]{inputenc}
\usepackage[T1]{fontenc}
\usepackage{hyperref}
\usepackage{url}
\usepackage{booktabs}
\usepackage{amsfonts}
\usepackage{nicefrac}
\usepackage{microtype}
\usepackage{xcolor}
\usepackage{amsmath, amssymb, amsthm}
\usepackage{graphicx}
\usepackage{algorithm}
\usepackage{algorithmic}
\usepackage{float}
\usepackage{bbm}
\usepackage{multirow}
\usepackage{subcaption}
\usepackage[most]{tcolorbox}
\usepackage{enumitem}

\setlist[itemize]{nosep,leftmargin=*}
\setlist[enumerate]{nosep,leftmargin=*}

\newtheorem{theorem}{Theorem}

\newtheorem{proposition}[theorem]{Proposition}

\newtheorem{assumption}{Assumption}
\newtheorem{remark}{Remark}
\newcommand{\EE}{\mathbb{E}}

\newcommand{\KL}{\mathrm{KL}}
\newcommand{\Var}{\mathrm{Var}}
\newcommand{\tr}{\mathrm{tr}}

\title{TIP: Token Importance in On-Policy Distillation}

\author{
    Yuanda Xu\thanks{Equal contribution.}\,
    \thanks{Correspondence to \texttt{yuanda@math.princeton.edu}} \quad
    Hejian Sang\footnotemark[1] \quad
    Zhengze Zhou\footnotemark[1] \quad
    Ran He\footnotemark[1] \quad
    Zhipeng Wang \quad
    Alborz Geramifard
}
\begin{document}
\maketitle
\begin{abstract}
On-policy knowledge distillation (OPD) trains a student on its own rollouts under token-level supervision from a teacher, but not all token positions matter equally, and existing views of token importance are incomplete.
We ask: \emph{which tokens carry the most useful learning signal in OPD?}
Our answer is that informative tokens come from \emph{two regions}: positions with high student entropy, and positions with low student entropy plus high teacher--student divergence---where the student is overconfident and wrong.
Empirically, student entropy is an effective but structurally incomplete proxy. Retaining 50\% of tokens with entropy-based sampling matches or exceeds all-token training while cutting end-to-end peak training memory by up to 47\%; under more aggressive retention, memory savings reach up to 58\%.
But entropy alone misses a second important region. When we isolate low-entropy, high-divergence tokens, training on fewer than 10\% of all tokens nearly matches full-token baselines, showing that overconfident tokens carry dense corrective signal despite being nearly invisible to entropy-only rules.
We organize these findings with \emph{TIP}(Token Importance in on-Policy distillation), a two-axis taxonomy over student entropy and teacher--student divergence that explains why entropy is useful yet structurally incomplete and motivates type-aware selection rules that combine uncertainty and disagreement.
We validate this picture across three teacher--student pairs spanning Qwen3, Llama, and Qwen2.5 on MATH-500 and AIME 2024/2025, and on the DeepPlanning benchmark for long-horizon agentic planning, where Q3-only training with 20\% of tokens surpasses full-token OPD.
Our experiments are implemented by extending the open-source OPD repository \url{https://github.com/HJSang/OPSD_OnPolicyDistillation}, which provides the practical training base for reproducing this work and supports memory-efficient distillation of larger models under limited GPU budgets.
\end{abstract}

\section{Introduction}
\label{sec:intro}

Knowledge distillation transfers capability from a large teacher to a smaller student by training on the teacher's output distributions~\citep{hinton2015distilling}, and is a primary driver of the rapid growth in small-model capacity~\citep{xiao2024densing}.
In on-policy distillation (OPD), the student generates its own responses and learns from the teacher's corrections at each token~\citep{agarwal2024gkd,gu2024minillm}.
Since the student generates the context, token importance is a property of the \emph{student--teacher state} at each position.
This raises a direct question: \emph{which tokens carry the most useful learning signal?}

Our central claim is simple.
In OPD, informative tokens come from two regions of the token state space:
(1) positions with \emph{high student entropy}, where the student is uncertain and still forming its prediction;
and (2) positions with \emph{low student entropy but high teacher--student divergence}, where the student is confident but misaligned with the teacher.
The first region is easy to detect with entropy alone: retaining 50\% of tokens by entropy-based sampling already matches or improves all-token training while substantially reducing end-to-end training memory.
The second is easy to miss: under more aggressive retention, entropy-only selection discards \emph{overconfident} tokens---positions where the student is sharply peaked on a continuation that the teacher strongly disfavors---because their low entropy makes them indistinguishable from correctly solved tokens.

We organize this picture with \textbf{TIP}, a two-axis taxonomy crossing student entropy and teacher--student divergence into four quadrants (Section~\ref{sec:taxonomy}).
Conceptually, entropy is an effective but structurally incomplete proxy: it must conflate ``confident and correct'' with ``confident and wrong,'' and a parameter-free Soft-OR score fixes this blind spot (Section~\ref{sec:theory}).
Experimentally, the combined score consistently improves over entropy-only selection on mathematical reasoning and remains competitive on agentic planning, where Q3-only selection is strongest (Section~\ref{sec:experiments}).

\paragraph{Contributions.}
\begin{enumerate}
    \item We propose TIP, a two-axis taxonomy that organizes token importance by student entropy and teacher--student divergence, requiring no verification labels and no extra computation beyond the standard OPD loss.
    \item We show that entropy is an effective but structurally incomplete proxy and prove that any entropy-only score is structurally blind to overconfident tokens, while a parameter-free Soft-OR score fixes this blind spot (Propositions~\ref{prop:optimal_weight}--\ref{prop:proxy_bias}, Remark~\ref{rem:fix}).
    \item We validate the taxonomy across several datasets and model families, and show that Soft-OR consistently outperforms entropy-only selection on mathematical reasoning while remaining competitive on long-horizon agentic planning in DeepPlanning~\citep{zhang2026deepplanning}, where Q3-only selection is strongest.
\end{enumerate}

\begin{figure}[t]
\centering
\includegraphics[width=\textwidth]{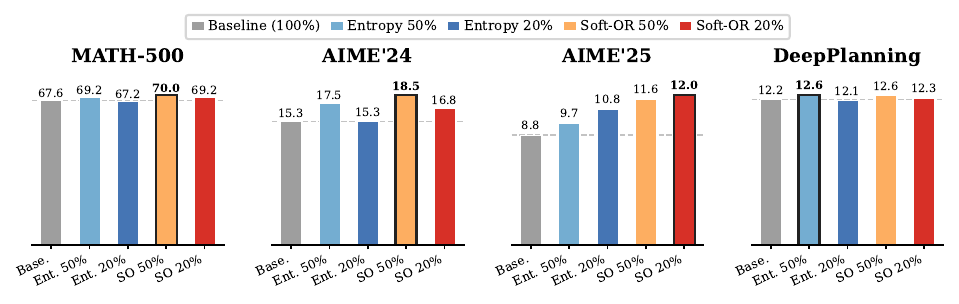}
\caption{\textbf{Cross-task summary: average accuracy by selection method.}
Each panel shows one benchmark; bar height is the mean accuracy (mean@16) averaged across three teacher--student pairs for mathematical reasoning (Qwen3-8B$\to$4B, Llama-70B$\to$8B, Qwen2.5-14B$\to$1.5B) and across two teacher sizes (14B, 32B) for DeepPlanning.
Methods: \emph{Base.}\ = all-token OPD (100\%); \emph{Ent.\ 50\%/20\%}\ = entropy-based token selection at the stated retention ratio; \emph{SO 50\%/20\%}\ = Soft-OR (Eq.~\ref{eq:score_theory}) top-$k$ selection.
The dashed line marks the all-token baseline.
Soft-OR consistently improves over entropy-only selection on the mathematical reasoning benchmarks and remains competitive on DeepPlanning, confirming that augmenting entropy with divergence recovers the Q3 blind spot without sacrificing Q1/Q2 coverage.}
\label{fig:main_results}
\end{figure}

\section{Related Work}
\label{sec:related}

\paragraph{Curriculum learning and importance sampling.}
The idea that not all training examples contribute equally dates to curriculum learning~\citep{bengio2009curriculum} and self-paced learning~\citep{kumar2010selfpaced}, which order or weight samples by difficulty.
Importance sampling extends this to gradient estimation: \citet{katharopoulos2018not} select mini-batch elements by gradient norm, and \citet{ren2018learning} learn per-example weights via meta-gradients.
These methods operate at the \emph{example} level.
Our work pushes the granularity to individual tokens within a sequence, where the relevant axes are student uncertainty and teacher--student disagreement rather than a scalar difficulty score.

\paragraph{Off-policy vs.\ on-policy distillation.}
Classical sequence-level KD~\citep{kim2016sequencekd} trains the student on teacher-generated sequences (off-policy).
On-policy distillation~\citep{agarwal2024gkd,gu2024minillm} instead lets the student generate its own rollouts and applies teacher supervision token-by-token, avoiding the train--test distribution mismatch inherent in off-policy data.
\citet{sang2026opsdc} further show that on-policy reverse KL self-distillation can compress lengthy reasoning chains into shorter ones.
Distillation has proven effective across diverse settings---from pretraining to extreme compression to expanding reasoning capabilities beyond what RL alone achieves~\citep{gu2024miniplm,xu2024onebit,yue2025doesrl}.
Because token importance in OPD is determined by the \emph{student's own} distribution at each position, it cannot be pre-computed from teacher outputs---it must be assessed online.
This makes the choice of which tokens to train on a fundamentally different problem from off-policy sample selection.

\paragraph{Response-level selection.}
Several methods operate at the sequence level: PACED~\citep{xu2026paced} utilizes a Beta-kernel weighting function based on the student model's pass rate to adaptively prioritize samples of intermediate difficulty, filtering out both trivial and overly complex data to enhance distillation efficiency, and LION~\citep{jiang2023lion} uses quality signals.
These approaches select rollouts to train on but treat all tokens within a response uniformly.
A complementary question---which we address---is which \emph{token within} a response carry the most signal.

\paragraph{Token-level importance in distillation and RL.}
In RL, \citet{wang2025beyond8020} showed that high-entropy ``forking tokens'' drive most gradient signal, \citet{cui2025entropy} further revealed that the covariance between token log-probability and advantage drives entropy collapse during policy optimization, SPINE~\citep{wu2025spine} extends this idea to test-time RL by updating only decision-critical branch points with entropy-band regularization, and \citet{xu2026ace} identified overconfident errors as a critical failure mode.
In distillation, AdaSwitch~\citep{peng2025adaswitch} switches between teacher and student guidance based on divergence, Entropy-Aware OPD~\citep{eopd2026} adapts the loss based on teacher entropy, SelecTKD~\citep{huang2025selectkd} lets the teacher verify student-proposed tokens via a propose-and-verify procedure and masks or down-weights rejected positions, LeaF~\citep{guo2025leaf} uses gradient-guided comparison with a teacher to identify and prune confounding tokens during distillation, and AdaKD~\citep{xie2026adakd} combines a divergence-based token selector (LATF, top-$r$\% by Hellinger distance) with token-level temperature scaling (IDTS).
Beyond fine-tuning, EntroDrop~\citep{wang2025entrodrop} shows that dropping low-entropy tokens during pretraining improves generalization under multi-epoch training, providing independent evidence that high-entropy positions carry most learning signal.
EDIS~\citep{zhu2026edis} further demonstrates that the \emph{temporal dynamics} of token entropy---not just its magnitude---can diagnose correct vs.\ incorrect reasoning trajectories.

Several concurrent works also explore token-level weighting, pruning, or compression across distillation and fine-tuning~\citep{wang2020minilm,tavor2026rethinking,kim2026tsdkd,wang2025winningpruning}.
Most closely related is AdaKD~\citep{xie2026adakd}, whose LATF module performs hard top-$r$\% selection by teacher--student Hellinger distance---a divergence-only view we ablate in Appendix~\ref{app:div_only}.
Our Q3 specification is not a relabeling of ``large-divergence tokens'': it is defined by the \emph{conjunction} of low student entropy and high disagreement, and the two axes induce different selections (Table~\ref{tab:entropy_sweep} vs.\ Table~\ref{tab:div_only}).
The AdaKD ablation itself supports this: LATF alone yields only $+0.04$ average ROUGE-L on Qwen2-1.5B (their Table~3a), with AdaKD's full gain coming from an orthogonal temperature-scaling module (IDTS); our budget-matched comparison (Appendix~\ref{app:div_only}) reaches the same conclusion.
Beyond this, our work proves that any entropy-only rule is structurally blind to low-entropy, high-divergence tokens (Proposition~\ref{prop:proxy_bias}), and proposes a parameter-free Soft-OR score that explicitly recovers this region, validated on mathematical reasoning and long-horizon agentic planning.

\section{Setup}
\label{sec:prelim}

Let $T$ denote a frozen teacher and $S_\theta$ a trainable student over vocabulary $V$.
A prompt $x \sim \mathcal{D}$ is drawn, the student generates a rollout $\mathbf{y} = (y_1,\ldots,y_m) \sim S_\theta(\cdot \mid x)$, and the teacher scores each position.
The context at position $t$ is $c_t = (x, y_{<t})$.
The standard on-policy distillation loss is:
\begin{equation}
    \mathcal{L} = \frac{1}{m} \sum_{t=1}^{m} D_{\KL}\!\left(P_S(\cdot \mid c_t) \,\|\, P_T(\cdot \mid c_t)\right).
    \label{eq:opd_loss}
\end{equation}

We characterize each token position by two quantities, both already computed during training:

\paragraph{Student entropy.}
\begin{equation}
    h_t = \frac{H\bigl(P_S(\cdot \mid c_t)\bigr)}{\log |V|} \in [0,1].
    \label{eq:entropy}
\end{equation}
High $h_t$ means the student is uncertain; low $h_t$ means it is confident.

\paragraph{Teacher--student divergence.}
\begin{equation}
    \delta_t = D_{\KL}\!\left(P_S(\cdot \mid c_t) \,\|\, P_T(\cdot \mid c_t)\right).
    \label{eq:divergence}
\end{equation}
High $\delta_t$ means the teacher disagrees with the student.
This is the per-token loss itself---no extra computation.

These two quantities define the plane in which we study token importance.
The empirical question of this paper is whether useful training signal concentrates in particular regions of the $(h_t, \delta_t)$ plane.

\section{TIP Taxonomy: A Two-Axis View of Token Importance}
\label{sec:taxonomy}

We organize token importance along two axes already computed during standard OPD training: student entropy $h_t$ and teacher--student divergence $\delta_t$.
Crossing them yields four quadrants (Table~\ref{tab:type_summary}, Figure~\ref{fig:taxonomy}).
The quadrants are highly imbalanced: Q4 accounts for roughly 40--47\% of all tokens, Q1 and Q2 together make up 40--52\%, and Q3 constitutes only 3--15\% across model families and datasets in the experimental setup, yet carries disproportionate corrective signal (Section~\ref{sec:q3_signal}; Appendix~\ref{app:qualitative} gives representative token-level examples, especially for Q1 and Q3).

\begin{table}[t]
\centering
\caption{\textbf{Token taxonomy.} Classification by student entropy $h_t$ and teacher--student divergence $\delta_t$.}
\label{tab:type_summary}
\small
\begin{tabular}{l cc l}
\toprule
Quadrant & $h_t$ & $\delta_t$ & Learning role \\
\midrule
Q1: High entropy, high divergence & High & High & Correct errors or consolidate fragile knowledge \\
Q2: High entropy, low divergence & High & Low & Stabilize underconfident predictions \\
Q3: Overconfident & Low & High & Break systematic confident biases \\
Q4: Solved & Low & Low & Negligible signal \\
\bottomrule
\end{tabular}
\end{table}

\begin{figure}[t]
\centering
\includegraphics[width=0.75\textwidth]{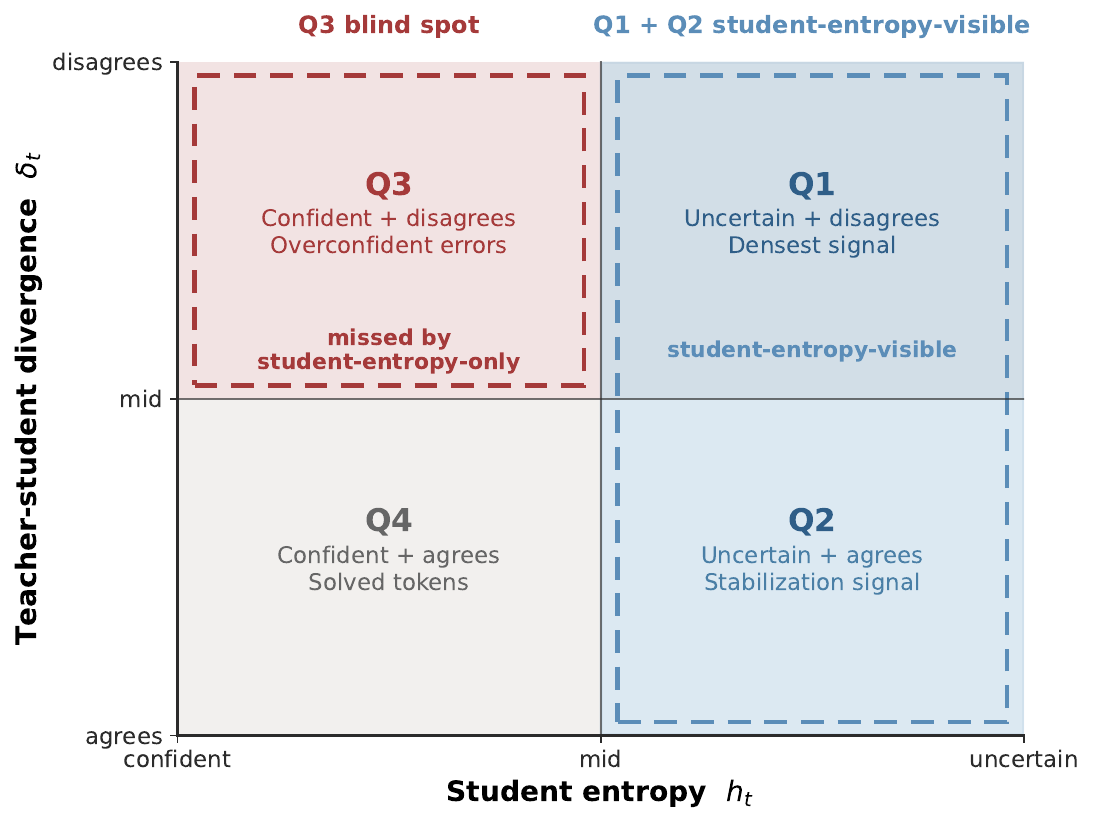}
\caption{\textbf{TIP taxonomy as a two-axis map.} Entropy determines whether the student is uncertain or confident; divergence determines whether the teacher agrees or disagrees. Q1 and Q2 are visible to entropy-based methods, while Q3 is the low-entropy blind spot that requires divergence to detect.}
\label{fig:taxonomy}
\end{figure}

\section{Theoretical Analysis}
\label{sec:theory}

The taxonomy suggests three predictions: high-entropy tokens should carry substantial learning signal relative to solved tokens (Q1/Q2 $\gg$ Q4); entropy-only selection should miss a specific class of tokens (Q3); and adding divergence should recover them.
We formalize these below and test each one experimentally in Section~\ref{sec:experiments}.
Specifically, we prove: (1) an oracle token weight suppresses solved tokens while allowing positive weight on both high-entropy and overconfident corrective tokens (Proposition~\ref{prop:optimal_weight}); (2) entropy-only scores are structurally blind to Q3 (Proposition~\ref{prop:proxy_bias}); and (3) augmenting entropy with divergence restores coverage of all informative quadrants (Remark~\ref{rem:fix}).

\subsection{Oracle Token Weight}

We want to identify which tokens most accelerate training.
We formalize this as: what per-token weights $\{w_t\}$ minimize the expected loss after one gradient step?

Let $g_t = \nabla_\theta \ell_t$ be the per-token gradient, $\bar\mu_t = \EE[g_t]$, and define
$\bar\phi_t = \langle \nabla L, \bar\mu_t \rangle$ and $\bar M_t = \EE[\|g_t\|^2]$.
Under $\beta$-smoothness and a token-separable approximation that neglects cross-token covariance terms (Appendix~\ref{app:descent_bound}), a weighted step $\hat g = \sum_t w_t g_t$ satisfies the surrogate bound:
\begin{equation}
    \EE[L(\theta - \eta\hat{g})] - L(\theta)
    \;\lesssim\;
    \sum_{t=1}^{m} \Bigl(-\eta\, w_t\, \bar\phi_t + \frac{\eta^2\beta}{2}\, w_t^2\, \bar M_t\Bigr).
    \label{eq:descent_bound_main}
\end{equation}

\begin{proposition}[\textbf{Oracle token weight}]
\label{prop:optimal_weight}
The bound is minimized at $w_t^* = \bar\phi_t / (\eta\beta\bar M_t)$,
with per-token descent $\Delta_t^* = -\bar\phi_t^2 / (2\beta\bar M_t)$.
\end{proposition}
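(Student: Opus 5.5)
The bound in Eq.~\eqref{eq:descent_bound_main} is a sum of terms that each involve only one weight $w_t$. So I would minimize it coordinatewise, as $m$ independent scalar quadratic problems. Fix $t$ and write
\[
f_t(w) = -\eta\, w\, \bar\phi_t + \frac{\eta^2\beta}{2}\, w^2\, \bar M_t .
\]
The right-hand side of Eq.~\eqref{eq:descent_bound_main} equals $\sum_t f_t(w_t)$ with no cross terms. This is because the token-separable approximation of Appendix~\ref{app:descent_bound} has already discarded the covariance terms. Hence a joint minimizer over $(w_1,\ldots,w_m)$ is obtained by minimizing each $f_t$ separately, and the minimal value of the bound is $\sum_t \min_w f_t(w)$.

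For each $t$ I would first record that $f_t$ is strictly convex. This needs $\eta>0$, $\beta>0$, and $\bar M_t = \EE[\|g_t\|^2] > 0$. The last condition fails only when $g_t = 0$ almost surely, and in that degenerate case the token contributes nothing and any weight is optimal; I would state this as a side remark. Setting $f_t'(w) = -\eta\bar\phi_t + \eta^2\beta\bar M_t w = 0$ gives the unique stationary point $w_t^* = \bar\phi_t/(\eta\beta\bar M_t)$. Strict convexity, $f_t'' = \eta^2\beta\bar M_t > 0$, makes it the global minimizer over $w\in\mathbb{R}$.

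Substituting back gives the per-token descent:
\[
f_t(w_t^*) = -\frac{\bar\phi_t^2}{\beta\bar M_t} + \frac{\bar\phi_t^2}{2\beta\bar M_t} = -\frac{\bar\phi_t^2}{2\beta\bar M_t} = \Delta_t^* .
\]
The learning rate $\eta$ cancels, which matches the stated form.

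There is no real obstacle in the algebra. The delicate point is interpretive: the problem is only separable because of the surrogate. The statement minimizes the bound, not the true expected loss, so I would note that the result is exact for the right-hand side of Eq.~\eqref{eq:descent_bound_main} and only approximate for $L$. I would also note that the weights are unconstrained. If one imposed $w_t\ge 0$, the optimum would become $\max(w_t^*,0)$, with zero descent when $\bar\phi_t\le 0$; the unconstrained statement as written does not need this.
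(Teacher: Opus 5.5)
Your proposal is correct and follows essentially the same route as the paper: use the token-separability of the surrogate bound to minimize each scalar quadratic independently, solve the first-order condition to get $w_t^* = \bar\phi_t/(\eta\beta\bar M_t)$, and substitute back to get $\Delta_t^* = -\bar\phi_t^2/(2\beta\bar M_t)$. Your extra remarks are sound refinements that the paper leaves implicit: the strict-convexity check (which needs $\bar M_t>0$), the degenerate case $g_t=0$ almost surely, and the projection $\max(w_t^*,0)$ under a nonnegativity constraint.
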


Indeed, the bound is separable across tokens, so each coordinate minimizes
$-\eta w_t \bar\phi_t + \frac{\eta^2\beta}{2} w_t^2 \bar M_t$ independently. Differentiating gives
$-\eta \bar\phi_t + \eta^2\beta w_t \bar M_t = 0$, hence
$w_t^* = \bar\phi_t / (\eta\beta\bar M_t)$. Substituting back yields
$\Delta_t^* = -\bar\phi_t^2 / (2\beta\bar M_t)$.

This is an oracle quantity (it depends on the population gradient), but it gives a clear interpretation: informative tokens have gradients that align well with descent without excessive energy.
Across the four quadrants:
\begin{itemize}
    \item \textbf{Q1}: Large $\bar\phi_t$ (strong correction under uncertainty) $\Rightarrow$ high oracle weight when the gradient is not too noisy.
    \item \textbf{Q2}: Moderate $\bar\phi_t$ (stabilizing underconfident predictions) $\Rightarrow$ non-negligible oracle weight.
    \item \textbf{Q3}: Positive, sometimes large $\bar\phi_t$ (real corrective signal despite low entropy) $\Rightarrow$ non-negligible oracle weight that entropy-only rules miss.
    \item \textbf{Q4}: Near-zero $\bar\phi_t$ $\Rightarrow$ negligible $w_t^*$.
\end{itemize}
Thus the robust conclusion is not a fixed total ordering among Q1, Q2, and Q3; it is the separation between informative regions with positive descent signal and Q4, whose signal is near zero.

\subsection{A Signal-to-Curvature View}

The oracle bound above can be read alongside a simpler local diagnostic picture.
For a single token, consider the forward-KL/cross-entropy diagnostic loss as a function of the student's logits $z_t$, with gradient $g_t = \nabla_{z_t}\ell_t$ and Hessian $H_t = \nabla_{z_t}^2\ell_t$.
A weighted step gives the second-order approximation
\begin{equation}
    \Delta \ell_t(w_t)
    \approx
    -\eta w_t \|g_t\|^2
    + \frac{\eta^2 w_t^2}{2} g_t^\top H_t g_t,
    \label{eq:local_weight_expansion}
\end{equation}
so the local optimum satisfies
\begin{equation}
    w_t^{\mathrm{local}}
    = \frac{\|g_t\|^2}{\eta\, g_t^\top H_t g_t}.
    \label{eq:local_weight}
\end{equation}
Thus token importance is a signal-to-curvature tradeoff: the numerator measures first-order corrective signal, while the denominator measures the local second-order cost of moving on that token.

This view clarifies why entropy alone is incomplete.
High-entropy tokens often have useful corrective signal, explaining why entropy is an effective but structurally incomplete proxy.
But Q3 tokens can also have a large numerator because the teacher strongly disagrees with a confident student prediction, while their low-entropy student distribution yields small softmax curvature (Appendix~\ref{app:logit_geometry}).
Q4 tokens share the same low-curvature regime, but their teacher--student disagreement is small, so the numerator is near zero.
The distinction between Q3 and Q4 is therefore a signal distinction, not an entropy distinction.

In practice, $w_t^*$ is unavailable because it depends on population-level quantities.
A natural proxy is student entropy $h_t$, but any such score is structurally blind to Q3:

\begin{proposition}[\textbf{Blind spot}]
\label{prop:proxy_bias}
Let $\hat{w}(h_t) = f(h_t)$ be any non-decreasing score with $f(0) = 0$ (e.g., $f(h) = h$ or $f(h) = \mathbbm{1}[h \geq \tau]$). Then Q3 tokens---which may have $w_t^* > 0$---receive $\hat{w}(h_t) \approx 0$. Entropy alone cannot distinguish ``confident and correct'' (Q4) from ``confident and wrong'' (Q3).
\end{proposition}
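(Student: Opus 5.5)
I will prove the statement in two parts. The first part is a continuity/threshold argument showing that the score vanishes on Q3. The second part is a construction of two token states with equal entropy and different oracle weights, showing that no function of entropy alone can separate them.

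\textbf{Step 1 (the score is small on Q3).} By definition Q3 tokens have $h_t \le \epsilon$ for some small low-entropy cutoff $\epsilon$. Since $f$ is non-decreasing, $0 = f(0) \le f(h_t) \le f(\epsilon)$. For the threshold example with $\tau > \epsilon$, this gives $\hat w = 0$ exactly. For continuous $f$ (e.g.\ $f(h)=h$), it gives $\hat w \le f(\epsilon) \to 0$ as $\epsilon \to 0$. This is the precise content of ``$\hat w(h_t) \approx 0$''; I will state it as $\sup_{\text{Q3}} \hat w \le f(\epsilon)$, which requires right-continuity of $f$ at $0$ (or $\tau>\epsilon$).

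\textbf{Step 2 (Q3 tokens can have positive oracle weight).} I will construct, for any entropy level $h\in(0,\epsilon]$, a student distribution $P_S$ with $H(P_S)/\log|V| = h$ that puts mass $1-\gamma$ on a token $a$, with the remainder spread uniformly. I will then exhibit two teachers. The first is $P_T = P_S$, giving $\delta_t = 0$ (Q4). The second puts mass $1-\gamma$ on a different token $b \ne a$, giving a large $\delta_t$ of order $\log(1/\gamma)$ (Q3).

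In the Q4 case, working in the logit-space diagnostic of Eq.~\eqref{eq:local_weight_expansion}, the gradient $g_t = P_S - P_T$ vanishes, so $\bar\phi_t=0$ and $w_t^*=0$. In the Q3 case, $\|g_t\|^2 \approx 2(1-\gamma)^2$ is bounded away from zero. Taking $\nabla L$ aligned with the population gradient, for instance a single-context population, gives $\bar\phi_t=\|g_t\|^2>0$ and hence $w_t^*>0$ by Proposition~\ref{prop:optimal_weight}.

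\textbf{Step 3 (indistinguishability).} The two tokens have identical $h_t$ by construction, so $\hat w$ assigns them the same value, while their oracle weights differ, $0$ versus strictly positive. Therefore no function of $h_t$ alone, monotone or not, can track $w^*$ on both tokens. This establishes the final sentence of the proposition.

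The main obstacle is Step 2. $\bar\phi_t$ involves the full-loss gradient $\nabla L$ in parameter space, not just the per-token logit gradient, so positivity is not automatic. I will handle this by working in the single-context or logit-parameterized model, where $\nabla L$ coincides with the token's own gradient. I will then note that, by continuity, positivity persists when $\nabla L$ has a positive inner product with that token's gradient. Since the proposition only claims that Q3 tokens \emph{may} have $w_t^*>0$, an existence construction suffices.
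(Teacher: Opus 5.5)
Your proposal is correct in substance and takes a different, more rigorous route than the paper. The paper gives no separate proof. It treats the proposition as immediate: Q3 tokens have low $h_t$, so a non-decreasing $f$ with $f(0)=0$ scores them near $f(0)=0$, and any function of $h_t$ alone must give Q3 and Q4 the same low score. The only support is the signal-to-curvature heuristic and the low-entropy Examples~1, 3 and~4.

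Your version adds two things.
\begin{itemize}
\item Step~1 pins down what ``$\approx 0$'' means and exposes a hypothesis the statement omits. As literally stated, the claim fails for $f(h)=\mathbbm{1}[h>0]$: it is non-decreasing with $f(0)=0$, yet it scores $1$ on every Q3 token, since softmax entropy is strictly positive. The threshold example also needs $\tau$ above the Q3 entropy cutoff. Right-continuity at $0$ is exactly the missing condition.
\item Steps~2--3 replace ``both scores are near zero'' with an exact witness pair that shares the \emph{same} student distribution. Indistinguishability therefore holds exactly, for every function of $h_t$, monotone or not.
\end{itemize}

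Two points in Step~2 need repair.

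First, the Q4 side of your contrast is degenerate. With $P_T=P_S$ you have $g_t=0$ and $\bar M_t=0$, so $w_t^*=\bar\phi_t/(\eta\beta\bar M_t)$ is $0/0$. The per-token bound is identically zero in $w_t$, so $w_t^*$ is undetermined, not $0$. Worse, your aligned model (tabular logits) makes the token's block of $\nabla L$ equal to $g_t/m$. Then $\bar\phi_t=\|g_t\|^2/m$ and $\bar M_t=\|g_t\|^2$, so $w_t^*=1/(m\eta\beta)$ for \emph{every} token with $g_t\neq 0$. A nearly solved Q4 token therefore gets exactly the same oracle weight as your Q3 token, and ``$0$ versus strictly positive'' holds only by convention at a single point. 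The paper's own Q4 bullet has the same looseness.

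To fix this, state the separation through quantities that are well defined and continuous:
\begin{itemize}
\item the descent signal $\bar\phi_t$, which is $0$ versus $\|g_t\|^2/m$; or
\item the optimal per-token descent, which is $0$ versus $-\|g_t\|^2/(2\beta m^2)$.
\end{itemize}
Alternatively, add a variance floor to $\bar M_t$ if you want $w_t^*$ itself to vanish on Q4.

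Second, your $g_t=P_S-P_T$ is the forward-KL diagnostic gradient. The $w_t^*$ of Proposition~\ref{prop:optimal_weight} is built from the reverse-KL training loss, whose logit gradient is $P_S(j)\bigl(\log\frac{P_S(j)}{P_T(j)}-\delta_t\bigr)$.
\begin{itemize}
\item Positivity survives, because this gradient vanishes only when $P_S=P_T$.
\item Your ``bounded away from zero'' claim does not survive. In your construction the reverse-KL gradient norm is of order $\gamma\log(|V|/\gamma)$, which tends to $0$ as $h\to 0$.
\end{itemize}
The statement only asserts that Q3 tokens \emph{may} have $w_t^*>0$, so this does not break the argument. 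You should still say which loss defines $g_t$.
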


\noindent Appendix~\ref{app:qualitative} illustrates this concretely: Examples~1, 3, and~4 show Q3 tokens with $h_t < 0.4$ that an entropy-only rule would discard, while Examples~2 and~5 show the contrasting high-entropy Q1 cases that entropy-based rules do capture.

Since divergence $\delta_t$ is already computed as part of the loss, the natural fix is a score that is nonzero whenever \emph{either} axis is active.
We define the Soft-OR score with min-max normalized inputs $\hat{h}_t, \hat{\delta}_t \in [0,1]$:
\begin{equation}
    s_t = \hat{h}_t + \hat{\delta}_t - \hat{h}_t \cdot \hat{\delta}_t
        = 1 - (1-\hat{h}_t)(1-\hat{\delta}_t).
    \label{eq:score_theory}
\end{equation}
This is parameter-free: $s_t$ is nonzero whenever either entropy or divergence is nonzero, without a tuning coefficient.

\begin{remark}[\textbf{Soft-OR fixes the blind spot}]
\label{rem:fix}
For any Q3 token with $\hat{h}_t \approx 0$ and $\hat{\delta}_t > 0$, the entropy proxy gives $\hat{w}_0(h_t) \approx 0$ (Proposition~\ref{prop:proxy_bias}), but $s_t \approx \hat{\delta}_t > 0$.
Simultaneously, Q4 tokens ($\hat{h}_t \approx 0$, $\hat{\delta}_t \approx 0$) remain suppressed: $s_t \approx 0$.
Q1 tokens retain the highest scores because both $\hat{h}_t$ and $\hat{\delta}_t$ are large ($s_t \approx 1$).
The Soft-OR score therefore preserves coverage of the high-entropy regions while recovering Q3 and suppressing Q4, without requiring $\bar\phi_t$ or $\bar M_t$.
\end{remark}

\paragraph{Empirical predictions.}
Table~\ref{tab:theory_predictions} maps each theoretical result to its experimental test.

\begin{table}[h]
\centering
\caption{\textbf{Theoretical predictions and experimental tests.}}
\label{tab:theory_predictions}
\small
\begin{tabular}{l l l}
\toprule
Result & Prediction & Tested in \\
\midrule
Proposition~\ref{prop:optimal_weight} & Q1/Q2 carry the most signal; removing Q4 improves efficiency & Section~\ref{sec:entropy_matters} \\
Proposition~\ref{prop:proxy_bias} & Entropy-only selection misses Q3 tokens & Section~\ref{sec:q3_signal} \\
Remark~\ref{rem:fix} & Combined score recovers Q3 and outperforms entropy-only & Section~\ref{sec:type_aware_results} \\
\bottomrule
\end{tabular}
\end{table}

\section{Method: Type-Aware Token Selection}
\label{sec:method}

Given a retention ratio $\rho \in (0,1]$, we retain the top-$\rho$ fraction of tokens by the Soft-OR score $s_t = \hat{h}_t + \hat{\delta}_t - \hat{h}_t \cdot \hat{\delta}_t$ (Equation~\ref{eq:score_theory}):
\begin{equation}
    \mathcal{T} = \mathrm{TopK}\bigl(\{s_t\}_{t=1}^{m},\; \lfloor \rho m \rfloor\bigr).
\end{equation}
The training loss is:
\begin{equation}
    \mathcal{L}_{\mathrm{TIP}} = \frac{1}{|\mathcal{T}|} \sum_{t \in \mathcal{T}} D_{\KL}\!\left(P_S(\cdot \mid c_t) \,\|\, P_T(\cdot \mid c_t)\right).
    \label{eq:tip_loss}
\end{equation}

Setting $\hat{\delta}_t = 0$ recovers entropy-only selection; including $\hat{\delta}_t$ additionally promotes Q3 tokens.
In practice, before computing $s_t$, we clip the top 2\% of entropy values within each batch and then apply min-max normalization to obtain $\hat{h}_t$, which suppresses rare outliers and stabilizes token ranking across batches.
The score is parameter-free and both $h_t$ and $\delta_t$ are already computed during standard distillation, so the only extra cost is this clipped min-max normalization and the top-$k$ sort---$O(m \log m)$ per rollout, negligible compared with forward and backward passes.

\section{Experiments}
\label{sec:experiments}

We now validate each prediction of the taxonomy.
Table~\ref{tab:theory_predictions} maps each theoretical result to its experimental test; we proceed from the strongest signal (high-entropy tokens, Section~\ref{sec:entropy_matters}) to the blind spot (Q3, Section~\ref{sec:q3_signal}) to the combined score (Section~\ref{sec:type_aware_results}).

\subsection{Experimental Setup}
\label{sec:exp_setup}

\paragraph{Models.}
Three teacher--student pairs across three model families for mathematical reasoning, plus one pair for agentic planning:
\begin{itemize}
    \item \textbf{Qwen3 Small:} Qwen3-8B (GRPO) $\to$ Qwen3-4B~\citep{qwen2025qwen3}
    \item \textbf{Llama:} Llama-3.3-70B-Instruct $\to$ Llama-3.1-8B-Instruct~\citep{grattafiori2024llama3}
    \item \textbf{Qwen2.5:} Qwen2.5-14B-Instruct-thinking $\to$ Qwen2.5-1.5B-Instruct~\citep{qwen2025qwen25} ($ {\sim}9\times$ capacity gap, reasoning teacher)
    \item \textbf{Qwen3 Agentic:} Qwen3-\{14B, 32B\} $\to$ Qwen3-1.7B~\citep{qwen2025qwen3} (all with thinking enabled, trained on agentic planning data)
\end{itemize}

\paragraph{Data and evaluation.}
For mathematical reasoning, training prompts are from DAPO~\citep{yu2025dapo}, with evaluation on MATH-500~\citep{hendrycks2021math} (500 problems) and AIME 2024/2025 (30 each).
For agentic planning, training data is from DeepPlanning~\citep{zhang2026deepplanning}, a benchmark featuring multi-day travel and multi-product shopping tasks that require proactive information acquisition, local constrained reasoning, and global constrained optimization; the Qwen3 Agentic pair is trained for 15 epochs.
All models are trained with AdamW, a cosine schedule, and reverse KL on student-generated rollouts (lr $= 1 \times 10^{-6}$ for Qwen3 and Qwen2.5; lr $= 3 \times 10^{-7}$ for Llama).

\subsection{High-Entropy Tokens (Q1/Q2)}
\label{sec:entropy_matters}

We begin with the simplest test of the taxonomy: if Q1/Q2 tokens dominate learning signal while Q4 tokens are negligible, then selecting by student entropy should preserve most of the benefit of OPD.

Table~\ref{tab:entropy_sweep} (and Figure~\ref{fig:entropy_sweep} in the Appendix) confirm this prediction.
Across all three model pairs, retaining 50\% of tokens with entropy-based sampling matches or outperforms the all-token baseline on most benchmarks, while total peak training memory drops substantially.
For Qwen3 Small, MATH improves from 76.7 to 78.6; for Llama, from 71.0 to 74.0.
This indicates that many low-entropy tokens are effectively solved (Q4) and mainly dilute the gradient; Appendix~\ref{app:qualitative} gives token-level intuition for the contrasting high-entropy cases that do carry corrective signal (Examples~2 and~5).

At the same time, entropy alone is incomplete. As the retention ratio becomes more aggressive, performance often drops below the full-token baseline, suggesting that useful signal remains in the discarded low-entropy region as Proposition~\ref{prop:proxy_bias} indicates. We test this hypothesis directly in the next section by isolating the low-entropy, high-divergence tokens that entropy-only selection discards.

\begin{table}[t]
\centering
\caption{\textbf{Entropy sampling across model pairs.} Accuracy (\%, mean@16 $\pm$ std). Sampling selects tokens with probability $p_t \propto h_t$. Peak Mem reports end-to-end training peak GPU memory. Bold marks the best per benchmark.}
\label{tab:entropy_sweep}
\small
\begin{tabular}{l l cccc}
\toprule
Model pair & Benchmark & 100\% & 50\% & 20\% & 10\% \\
\midrule
\multirow{3}{*}{Qwen3-8B (GRPO) $\to$ 4B}
 & MATH-500 & 76.7 $\pm$ 0.7 & \textbf{78.6} $\pm$ 0.6 & 74.1 $\pm$ 0.9 & 70.8 $\pm$ 1.0 \\
 & AIME'24 & 21.9 $\pm$ 1.2 & \textbf{23.8} $\pm$ 1.3 & 22.5 $\pm$ 1.1 & 21.1 $\pm$ 1.2 \\
 & AIME'25 & 19.4 $\pm$ 1.1 & 20.7 $\pm$ 1.3 & \textbf{21.5} $\pm$ 1.2 & 19.2 $\pm$ 1.0 \\
\midrule
\multirow{3}{*}{Llama-70B $\to$ 8B}
 & MATH-500 & 71.0 $\pm$ 0.7 & \textbf{74.0} $\pm$ 0.8 & 73.6 $\pm$ 0.7 & 72.5 $\pm$ 0.8 \\
 & AIME'24 & 21.5 $\pm$ 1.1 & \textbf{25.3} $\pm$ 1.5 & 18.8 $\pm$ 1.3 & 20.6 $\pm$ 1.4 \\
 & AIME'25 & 4.9 $\pm$ 0.9 & 7.5 $\pm$ 1.1 & \textbf{10.0} $\pm$ 1.2 & 9.3 $\pm$ 1.0 \\
\midrule
\multirow{3}{*}{Qwen2.5-14B $\to$ 1.5B}
 & MATH-500 & \textbf{55.1} $\pm$ 0.9 & 54.9 $\pm$ 0.9 & 54.0 $\pm$ 0.9 & \textbf{55.1} $\pm$ 0.9 \\
 & AIME'24 & 2.4 $\pm$ 0.7 & 3.3 $\pm$ 1.4 & \textbf{4.6} $\pm$ 1.3 & 2.7 $\pm$ 1.1 \\
 & AIME'25 & \textbf{2.1} $\pm$ 0.9 & 1.0 $\pm$ 0.5 & 1.0 $\pm$ 0.6 & 1.3 $\pm$ 0.8 \\
\midrule
\multicolumn{2}{l}{Peak Mem (GB, Qwen3)} & 72.0 & 38.1 & 35.5 & 35.3 \\
\multicolumn{2}{l}{Peak Mem (GB, Llama)} & 41.6 & 32.1 & 31.8 & 31.9 \\
\multicolumn{2}{l}{Peak Mem (GB, Qwen2.5)} & 35.8 & 19.7 & 15.0 & 14.9 \\
\bottomrule
\end{tabular}
\end{table}

\subsection{Overconfident Tokens (Q3)}
\label{sec:q3_signal}

We next test the blind-spot prediction directly by constructing the opposite of entropy-based selection: a selector that prioritizes tokens with \emph{low entropy but high teacher--student divergence}---targeting the Q3 regime in the taxonomy.

\paragraph{Q3 selection procedure.}
We isolate overconfident tokens using a confidence-weighted divergence score:
\begin{enumerate}
    \item Compute per-token forward KL: $\delta_t^{\mathrm{fwd}} = D_{\KL}(P_T(\cdot \mid c_t) \| P_S(\cdot \mid c_t))$.
    \item Compute per-token student entropy $h_t$ (Equation~\ref{eq:entropy}), clip at the 98th batch percentile $h_t \leftarrow \min(h_t, h^{(0.98)})$, and min-max normalize to $[0,1]$: $\hat{h}_t = (h_t - h_{\min}) / (h_{\max} - h_{\min})$.
    \item Define confidence as $\mathrm{conf}_t = 1 - \hat{h}_t$ (low entropy $\Rightarrow$ high confidence).
    \item Compute the Q3 score: $w_t^{\mathrm{Q3}} = \delta_t^{\mathrm{fwd}} \cdot \mathrm{conf}_t$.
\end{enumerate}
Tokens with high $w_t^{\mathrm{Q3}}$ are precisely the positions where the student is highly confident while the teacher strongly disagrees.

It is important to distinguish the roles of the two KL directions in this experiment.
Forward KL is used \emph{solely to rank tokens for selection}; the \emph{training loss} on the selected subset remains the standard reverse KL of Equation~\ref{eq:opd_loss}, $D_{\KL}(P_S \| P_T)$.
The two choices serve different purposes.
\emph{Why reverse KL for the loss:}
Reverse KL is mode-seeking---it drives the student to concentrate on the teacher's dominant mode rather than spread mass across all modes---which is the standard and numerically stable objective for on-policy distillation; it is also already computed during the forward pass, so restricting the loss to a token subset introduces no new computation.
\emph{Why forward KL for ranking:}
When the student is near-deterministic on $v^*$, reverse KL is dominated by the teacher probability assigned to the student's chosen token and has limited sensitivity to teacher-preferred alternatives that the student nearly ignores.
Forward KL, $\sum_v P_T(v)\log(P_T(v)/P_S(v))$, directly penalizes missing teacher mass and therefore produces a sharper ranking signal for Q3-type overconfidence.
We do not use forward KL as the training loss because its mode-covering bias would encourage the student to spread probability over multiple teacher-supported continuations, whereas the reverse-KL OPD baseline is the objective being compared.
The conceptual definition of Q3---low student entropy plus high teacher--student disagreement---does not depend on the KL direction; forward KL is an operational detector for this diagnostic Q3-only experiment.

\begin{table}[t]
\centering
\caption{\textbf{Training on Q3 (overconfident) tokens only.} Accuracy (\%, mean@16 $\pm$ std) when training exclusively on low-entropy, high-divergence tokens. Q3-only training with $<$10\% of all tokens can nearly match the all-token baseline across model pairs. Peak Mem reports end-to-end training peak GPU memory.}
\label{tab:q3_only}
\small
\begin{tabular}{l l ccc}
\toprule
Model pair & Benchmark & Baseline (100\%) & Q3 20\% & Q3 10\% \\
\midrule
\multirow{3}{*}{Qwen3-8B $\to$ 4B}
 & MATH-500 & \textbf{76.7} $\pm$ 0.7 & 75.7 $\pm$ 0.8 & 76.1 $\pm$ 0.9 \\
 & AIME'24 & \textbf{21.9} $\pm$ 1.2 & 20.2 $\pm$ 1.1 & 21.5 $\pm$ 1.3 \\
 & AIME'25 & \textbf{19.4} $\pm$ 1.1 & 19.2 $\pm$ 1.1 & 17.1 $\pm$ 1.0 \\
\midrule
\multirow{3}{*}{Llama-70B $\to$ 8B}
 & MATH-500 & 71.0 $\pm$ 0.7 & \textbf{71.8} $\pm$ 1.0 & 70.4 $\pm$ 1.1 \\
 & AIME'24 & \textbf{21.5} $\pm$ 1.1 & 20.2 $\pm$ 1.2 & 20.8 $\pm$ 1.1 \\
 & AIME'25 & \textbf{4.9} $\pm$ 0.9 & 4.5 $\pm$ 0.7 & 4.1 $\pm$ 0.7 \\
\midrule
\multirow{3}{*}{Qwen2.5-14B $\to$ 1.5B}
 & MATH-500 & \textbf{55.1} $\pm$ 0.9 & 54.6 $\pm$ 0.9 & 54.1 $\pm$ 0.9 \\
 & AIME'24 & 2.4 $\pm$ 0.7 & 2.5 $\pm$ 0.9 & \textbf{3.3} $\pm$ 1.4 \\
 & AIME'25 & 2.1 $\pm$ 0.9 & \textbf{3.3} $\pm$ 1.4 & 0.8 $\pm$ 0.4 \\
\midrule
\multicolumn{2}{l}{Peak Mem (GB, Qwen3)} & 72.0 & 36.4 & 36.2 \\
\multicolumn{2}{l}{Peak Mem (GB, Llama)} & 41.6 & 32.5 & 32.4 \\
\multicolumn{2}{l}{Peak Mem (GB, Qwen2.5)} & 35.8 & 19.7 & 19.6 \\
\bottomrule
\end{tabular}
\end{table}

Table~\ref{tab:q3_only} confirms that the Q3 region carries real corrective signal.
For Qwen3, training on only 5.7K overconfident tokens ($<$10\% of all tokens) reaches 76.1 on MATH, versus 76.7 for the full-token baseline.
For Qwen2.5, Q3-only training matches or exceeds the baseline on several benchmarks.
These results validate the taxonomy's prediction that Q3 tokens are informative despite having near-zero entropy.
Appendix~\ref{app:qualitative} provides concrete examples: a student that repeats a generic variable instead of substituting a concrete value (Ex.~1), an arithmetic computation error (Ex.~3), and a confident variable-level misstep in a derivation (Ex.~4)---all near-zero entropy, all strongly corrected by the teacher.

\paragraph{Q3 is not just ``large divergence''.}
Q3 tokens are by construction high-$\delta$, but the selector is structurally different from a pure divergence view.
Ranking by $\delta_t$ alone at a budget matched to Q3-only underperforms the all-token baseline and only catches up when given $5\times$ more tokens (Appendix~\ref{app:div_only}, Table~\ref{tab:div_only}); it is the low-entropy conjunct that makes selection performant under tight budgets.
Conversely, high student entropy is not a proxy for high divergence---the two axes induce different selections and different performance curves (Table~\ref{tab:entropy_sweep} vs.\ Table~\ref{tab:div_only}), and entropy-only rules are provably blind to Q3 (Proposition~\ref{prop:proxy_bias}).
A related concern is that high-$\delta$ tokens already yield the largest per-token gradients and thus should dominate the update; but in full-token training the aggregate update sums over all positions---mostly low-/moderate-$\delta$ ones---and the operationally relevant question is which tokens are worth \emph{keeping} under a budget, which Table~\ref{tab:div_only} answers in favor of the low-entropy + high-divergence selector.

\subsection{Type-Aware Selection (TIP)}
\label{sec:type_aware_results}

The prediction of Remark~\ref{rem:fix} is that combining entropy with divergence should outperform entropy-only selection by recovering Q3 without sacrificing Q1/Q2.
Table~\ref{tab:main_results} and Figure~\ref{fig:main_results} present the comparison, and Appendix~\ref{app:qualitative} provides concrete token-level examples of the Q1/Q3 behaviors that the combined score is designed to retain.

\begin{table}[t]
\centering
\caption{\textbf{Main results: Baseline vs.\ Entropy-only vs.\ Soft-OR.} Accuracy (\%, mean@16 $\pm$ std). Soft-OR uses $s_t = \hat{h}_t + \hat{\delta}_t - \hat{h}_t \cdot \hat{\delta}_t$ (Eq.~\ref{eq:score_theory}) with Top-K selection. Bold marks the best per benchmark.}
\label{tab:main_results}
\resizebox{\textwidth}{!}{%
\begin{tabular}{l l c cc cc}
\toprule
 & & Baseline & \multicolumn{2}{c}{Entropy-only} & \multicolumn{2}{c}{Soft-OR} \\
\cmidrule(lr){4-5} \cmidrule(lr){6-7}
Model pair & Benchmark & 100\% & 50\% & 20\% & 50\% & 20\% \\
\midrule
\multirow{3}{*}{Qwen3-8B $\to$ 4B}
 & MATH-500 & 76.7 $\pm$ 0.7 & 78.6 $\pm$ 0.6 & 74.1 $\pm$ 0.9 & \textbf{79.1} $\pm$ 0.8 & 77.6 $\pm$ 0.7 \\
 & AIME'24 & 21.9 $\pm$ 1.2 & 23.8 $\pm$ 1.3 & 22.5 $\pm$ 1.1 & \textbf{25.7} $\pm$ 1.4 & 24.5 $\pm$ 1.2 \\
 & AIME'25 & 19.4 $\pm$ 1.1 & 20.7 $\pm$ 1.3 & 21.5 $\pm$ 1.2 & 21.9 $\pm$ 1.2 & \textbf{23.2} $\pm$ 1.2 \\
\midrule
\multirow{3}{*}{Llama-70B $\to$ 8B}
 & MATH-500 & 71.0 $\pm$ 0.7 & 74.0 $\pm$ 0.8 & 73.6 $\pm$ 0.7 & \textbf{74.7} $\pm$ 1.0 & 74.2 $\pm$ 0.7 \\
 & AIME'24 & 21.5 $\pm$ 1.1 & 25.3 $\pm$ 1.5 & 18.8 $\pm$ 1.3 & \textbf{26.0} $\pm$ 1.4 & 21.0 $\pm$ 1.5 \\
 & AIME'25 & 4.9 $\pm$ 0.9 & 7.5 $\pm$ 1.1 & 10.0 $\pm$ 1.2 & \textbf{11.5} $\pm$ 1.1 & 10.9 $\pm$ 1.4 \\
\midrule
\multirow{3}{*}{Qwen2.5-14B $\to$ 1.5B}
 & MATH-500 & 55.1 $\pm$ 0.9 & 54.9 $\pm$ 0.9 & 54.0 $\pm$ 0.9 & \textbf{56.2} $\pm$ 1.2 & 55.8 $\pm$ 0.9 \\
 & AIME'24 & 2.4 $\pm$ 0.7 & 3.3 $\pm$ 1.4 & 4.6 $\pm$ 1.3 & 3.8 $\pm$ 1.2 & \textbf{5.0} $\pm$ 1.3 \\
 & AIME'25 & \textbf{2.1} $\pm$ 0.9 & 1.0 $\pm$ 0.5 & 1.0 $\pm$ 0.6 & 1.5 $\pm$ 0.7 & 1.8 $\pm$ 0.6 \\
\bottomrule
\end{tabular}}
\end{table}

\paragraph{Top vs.\ bottom tokens by Soft-OR score.}
A natural sanity check is the complementary experiment: instead of training on the top 50\% tokens by Soft-OR score, train on the \emph{bottom} 50\%.
If the taxonomy is correct, these tokens should be predominantly Q4 (solved) and carry negligible learning signal.

\begin{table}[t]
\centering
\caption{\textbf{Top 50\% vs.\ bottom 50\% by Soft-OR score.} Accuracy (\%, mean@16 $\pm$ std). ``Top'' trains on the highest-scoring half by $s_t$; ``Bot.'' trains on the lowest-scoring half. The bottom tokens carry substantially less signal.}
\label{tab:rest_tokens}
\small
\begin{tabular}{l cc cc cc}
\toprule
 & \multicolumn{2}{c}{Qwen3-8B $\to$ 4B} & \multicolumn{2}{c}{Llama-70B $\to$ 8B} & \multicolumn{2}{c}{Qwen2.5-14B $\to$ 1.5B} \\
\cmidrule(lr){2-3}\cmidrule(lr){4-5}\cmidrule(lr){6-7}
Benchmark & Top 50\% & Bot.\ 50\% & Top 50\% & Bot.\ 50\% & Top 50\% & Bot.\ 50\% \\
\midrule
MATH-500 & 79.1 $\pm$ 0.8 & 72.3 $\pm$ 0.9 & 74.7 $\pm$ 1.0 & 67.4 $\pm$ 1.1 & 56.2 $\pm$ 1.2 & 50.3 $\pm$ 1.0 \\
AIME'24  & 25.7 $\pm$ 1.4 & 15.5 $\pm$ 1.3 & 26.0 $\pm$ 1.4 & 17.2 $\pm$ 1.3 & 3.8 $\pm$ 1.2  & 1.5 $\pm$ 0.7  \\
AIME'25  & 21.9 $\pm$ 1.2 & 12.8 $\pm$ 1.2 & 11.5 $\pm$ 1.1 & 2.9 $\pm$ 0.8  & 1.5 $\pm$ 0.7  & 0.8 $\pm$ 0.5  \\
\bottomrule
\end{tabular}
\end{table}

\paragraph{Teacher entropy is uninformative.}
Teacher entropy is nearly constant across positions and therefore provides little discriminative signal for token selection; details are deferred to Appendix~\ref{app:teacher_entropy}. The useful axes are the \emph{student's} state ($h_t$) and the student--teacher gap ($\delta_t$), not the teacher's uncertainty.

\subsection{Beyond Mathematical Reasoning: Agentic Planning}
\label{sec:agentic}

The preceding experiments focus on mathematical reasoning.
To test whether TIP generalizes beyond this domain, we apply it to the DeepPlanning benchmark~\citep{zhang2026deepplanning} (Section~\ref{sec:exp_setup}). 

\paragraph{Setup.}
Using the Qwen3 Agentic pair described in Section~\ref{sec:exp_setup}, we train on 80\% of the DeepPlanning Travel Planning tasks and evaluate on the remaining 20\%.
We report the average accuracy (Avg@16) over 16 samples.
Following DeepPlanning, we score each plan by the fraction of personalized hard constraints satisfied; these scores are lower than commonsense scores because personalized requirements (e.g., budget limits, dietary restrictions) are more demanding.

\begin{table}[t]
\centering
\caption{\textbf{Agentic planning on DeepPlanning} (Qwen3-1.7B student, thinking-enabled, Avg@16 \%).
\emph{Left}: reference and entropy-only methods. \emph{Right}: Q3-only and Soft-OR.
Q3-only 20\% \emph{surpasses} full-token OPD; Soft-OR matches or exceeds entropy-only.}
\label{tab:agentic_results}
\footnotesize
\begin{minipage}[t]{0.48\textwidth}
\centering
\begin{tabular}{l cc}
\toprule
Method & Teacher 14B & Teacher 32B \\
\midrule
OPD, all tokens (100\%) & $11.7{\pm}0.07$ & $12.8{\pm}0.07$ \\
+ Entropy-only 50\% & $12.1{\pm}0.06$ & $13.1{\pm}0.07$ \\
+ Entropy-only 20\% & $11.6{\pm}0.07$ & $12.7{\pm}0.06$ \\
\bottomrule
\end{tabular}
\end{minipage}%
\hfill%
\begin{minipage}[t]{0.48\textwidth}
\centering
\begin{tabular}{l cc}
\toprule
Method & Teacher 14B & Teacher 32B \\
\midrule
OPD + Q3-only 20\% & $\mathbf{12.6}{\pm}0.07$ & $\mathbf{13.6}{\pm}0.07$ \\
OPD + Soft-OR 50\% & $12.0{\pm}0.06$ & $13.1{\pm}0.08$ \\
OPD + Soft-OR 20\% & $12.1{\pm}0.06$ & $12.6{\pm}0.07$ \\
\bottomrule
\end{tabular}
\end{minipage}
\end{table}

Table~\ref{tab:agentic_results} confirms that TIP generalizes to a fundamentally different domain, with entropy-based selection at 50\% retention matching or exceeding full-token OPD (12.1 vs.\ 11.7 for 14B; 13.1 vs.\ 12.8 for 32B).
The most striking finding is Q3: training on only 20\% of overconfident tokens \emph{surpasses} full-token OPD for both teachers (12.6 vs.\ 11.7; 13.6 vs.\ 12.8), confirming that entropy discards exactly the tokens with the densest corrective signal.
This aligns with the structure of agentic tasks: a single wrong but confident commitment---booking a closed venue, violating a budget constraint---can invalidate an entire plan, making Q3 corrections especially concentrated.
Appendix~\ref{app:agent_detail} provides Best@16 results, confirming the same pattern.

\section{Discussion and Conclusion}
\label{sec:conclusion}

TIP establishes that token importance in OPD is governed by two axes---student entropy and teacher--student divergence---and that both are necessary.
All three theoretical predictions (Propositions~\ref{prop:optimal_weight}--\ref{prop:proxy_bias}, Remark~\ref{rem:fix}) are supported across three model families and two task domains (Tables~\ref{tab:entropy_sweep}--\ref{tab:agentic_results}), with the clearest result being the Q3 blind spot: entropy-only rules provably cannot distinguish ``confident and correct'' from ``confident and wrong,'' yet fewer than 10\% of tokens selected precisely for overconfidence nearly matches full-training performance.
This picture is sharpest on DeepPlanning, where Q3-only training with 20\% of tokens \emph{surpasses} full OPD (12.6 vs.\ 11.7 Avg@16), suggesting that the value of catching overconfident errors grows when more downstream computation depends on a single committed step. More broadly, the two-axis framing applies to other settings where on-policy token-level supervision is used, including process reward fine-tuning and speculative decoding.

\paragraph{Limitations.}
Our largest teacher is 70B and rollouts reach 16K tokens; whether the quadrant structure and Q3 concentration persist at trillion-parameter scale or with the extremely long rollouts typical of agentic tool-calling pipelines remains an open question.

\bibliographystyle{plainnat}
\bibliography{references}

\newpage
\appendix

\section{Supplementary Theory}
\label{app:proofs}

\subsection{Logit-Level Geometry of Forward KL}
\label{app:logit_geometry}

This appendix gives the local diagnostic calculation behind the signal-to-curvature view in Section~\ref{sec:theory}; it is not a replacement for the reverse-KL OPD objective used for training.
For a token position, write the student distribution as $P_S = \mathrm{softmax}(z)$ and treat the teacher distribution $P_T$ as fixed.
The forward KL is
\begin{equation}
    D_{\KL}(P_T \| P_S)
    = \sum_{k=1}^{|V|} P_T(k) \log \frac{P_T(k)}{P_S(k)}.
\end{equation}
The softmax Jacobian is
\begin{equation}
    \frac{\partial P_S(k)}{\partial z_j}
    = P_S(k)\bigl(\mathbbm{1}\{j=k\} - P_S(j)\bigr).
\end{equation}
Therefore the gradient with respect to the student logits is
\begin{equation}
    \frac{\partial}{\partial z_j} D_{\KL}(P_T \| P_S)
    = -\sum_{k=1}^{|V|} P_T(k)\bigl(\mathbbm{1}\{j=k\} - P_S(j)\bigr)
    = P_S(j) - P_T(j),
\end{equation}
or, in vector form, $\nabla_z D_{\KL}(P_T \| P_S) = P_S - P_T$.
Differentiating once more gives the softmax Hessian
\begin{equation}
    H
    = \nabla_z^2 D_{\KL}(P_T \| P_S)
    = \mathrm{diag}(P_S) - P_S P_S^\top.
\end{equation}
Its trace is
\begin{equation}
    \tr(H)
    = \sum_{i=1}^{|V|} P_S(i)(1-P_S(i))
    = 1 - \sum_{i=1}^{|V|} P_S(i)^2.
\end{equation}
Thus the curvature term is small for near-deterministic student distributions and larger for diffuse student distributions.
This trace is not Shannon entropy, but it is a curvature-side proxy for the spread of the student distribution.

For a weighted local update on token $t$, the second-order approximation is
\begin{equation}
    \Delta \ell_t(w_t)
    \approx
    -\eta w_t \|g_t\|^2
    + \frac{\eta^2 w_t^2}{2} g_t^\top H_t g_t,
\end{equation}
which is minimized at
\begin{equation}
    w_t^{\mathrm{local}}
    = \frac{\|g_t\|^2}{\eta\, g_t^\top H_t g_t}.
\end{equation}
Moreover, by the Rayleigh quotient,
\begin{equation}
    g_t^\top H_t g_t
    \leq \lambda_{\max}(H_t)\|g_t\|^2
    \leq \tr(H_t)\|g_t\|^2.
\end{equation}
This bound should not be read as an entropy-only weighting rule: small curvature is valuable only when the numerator $\|g_t\|^2$ is non-negligible.
Q3 and Q4 can both have low entropy and low curvature, but Q3 has large teacher--student disagreement while Q4 does not.

\subsection{Derivation of the Descent Bound}
\label{app:descent_bound}

\begin{assumption}[\textbf{Smoothness}]
\label{asm:smooth}
$L(\theta') \leq L(\theta) + \langle \nabla L(\theta), \theta'-\theta\rangle + \frac{\beta}{2}\|\theta'-\theta\|^2$.
\end{assumption}

\begin{assumption}[\textbf{Token-separable approximation}]
\label{asm:decorr}
For tractability, we neglect off-diagonal gradient interactions across token positions. Concretely, for $t \neq s$ we treat the centered cross-token covariance
$$
\EE[(g_t - \bar\mu_t)(g_s - \bar\mu_s)^\top]
$$
as lower-order, so that the quadratic term admits a token-separable approximation.
\end{assumption}

\begin{proof}[Derivation]
Expand $L(\theta - \eta\hat g)$ via smoothness where $\hat g = \sum_t w_t g_t$:
\begin{equation}
    L(\theta - \eta\hat g) \leq L(\theta) - \eta\langle \nabla L, \hat g\rangle + \frac{\eta^2\beta}{2}\|\hat g\|^2.
\end{equation}
Using linearity of expectation and $\hat g = \sum_t w_t g_t$,
\begin{equation}
    \EE[-\eta\langle \nabla L, \hat g\rangle]
    = -\eta \sum_t w_t \, \EE[\langle \nabla L, g_t\rangle]
    = -\eta \sum_t w_t \, \langle \nabla L, \bar\mu_t\rangle
    = -\eta \sum_t w_t \, \bar\phi_t.
\end{equation}
For the quadratic term,
\begin{equation}
    \EE[\|\hat g\|^2]
    = \EE\Bigl[\Bigl\|\sum_t w_t g_t\Bigr\|^2\Bigr]
    = \sum_t w_t^2 \EE[\|g_t\|^2] + \sum_{t \neq s} w_t w_s \, \EE[\langle g_t, g_s\rangle].
\end{equation}
Write $g_t = \bar\mu_t + (g_t - \bar\mu_t)$. Under Assumption~\ref{asm:decorr}, we drop the off-diagonal covariance contribution and treat the remaining mean interaction terms as lower-order; absorbing these into the $\lesssim$ notation gives the token-separable approximation
\begin{equation}
    \EE[\|\hat g\|^2] \lesssim \sum_t w_t^2 \bar M_t.
\end{equation}
Combining the two displays gives
\begin{equation}
    \EE[L(\theta - \eta\hat g)] - L(\theta)
    \lesssim
    \sum_t \Bigl(-\eta w_t \bar\phi_t + \frac{\eta^2\beta}{2} w_t^2 \bar M_t\Bigr).
\end{equation}
The right-hand side is separable in $t$, so minimizing each term gives
\begin{equation}
    \frac{\partial}{\partial w_t}\Bigl(-\eta w_t \bar\phi_t + \frac{\eta^2\beta}{2} w_t^2 \bar M_t\Bigr)
    = -\eta \bar\phi_t + \eta^2\beta w_t \bar M_t = 0,
\end{equation}
which yields $w_t^* = \bar\phi_t/(\eta\beta\bar M_t)$.
\end{proof}

\subsection{Entropy-Weighted Sampling: Coverage and Variance}

Sampling tokens with $p_t \propto h_t$ and using the importance-weighted estimator $\hat{g}_{\mathrm{IS}} = \frac{1}{m}\sum_{t \in S} g_t / p_t$ is unbiased whenever $p_t > 0$, with variance
\begin{equation}
    \Var(\hat{g}_{\mathrm{IS}}) = \frac{1}{m^2}\sum_{t=1}^m \frac{1-p_t}{p_t}\, \EE[\|g_t\|^2].
\end{equation}
This makes the tradeoff transparent: entropy sampling preserves nonzero Q3 coverage, but the variance cost grows as $1/p_t$ for low-entropy tokens.

\subsection{Why Adding Divergence Improves the Ranking}

Under entropy-only scoring $\hat w_0 = h_t$, Q3 and Q4 are both pushed to the low-score end.
The Soft-OR score $s_t = \hat{h}_t + \hat{\delta}_t - \hat{h}_t \cdot \hat{\delta}_t$ separates them: Q3 receives positive score from divergence ($s_t \approx \hat{\delta}_t$) while Q4 remains near zero (both axes small).
Because the product term $\hat{h}_t \cdot \hat{\delta}_t$ prevents double-counting, the high-entropy ranking is preserved while Q3 is separated from Q4 without any tuning parameter.

\section{Supplementary Experiments}
\label{app:experiments}

\begin{figure}[H]
\centering
\includegraphics[width=\textwidth]{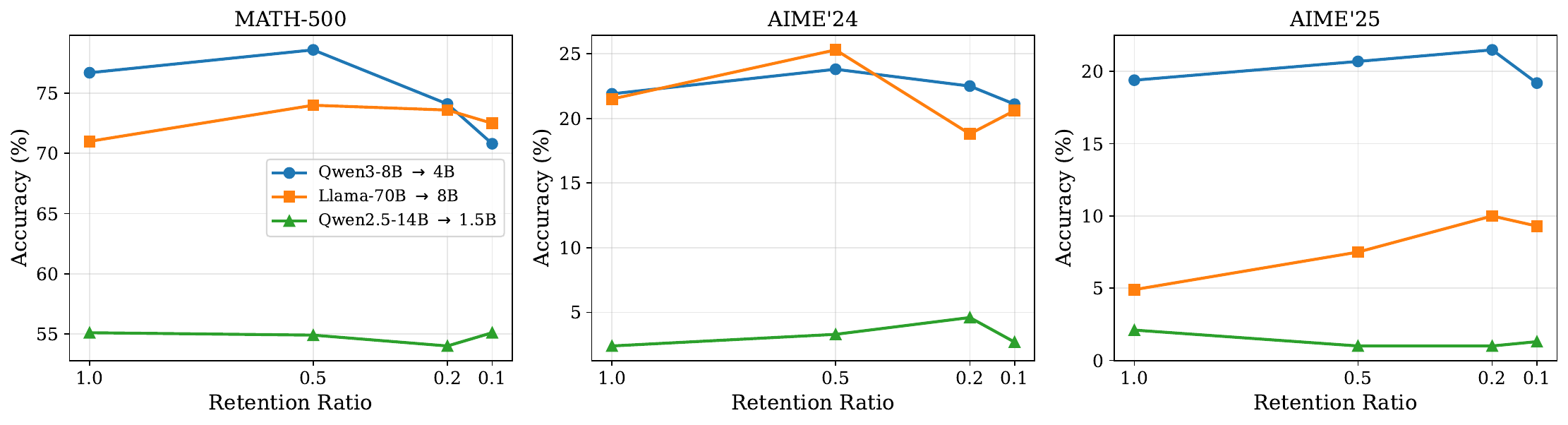}
\caption{\textbf{Entropy sampling across retention ratios.} Accuracy (mean@16) on three benchmarks as a function of retention ratio. Retaining 50\% of tokens with entropy-based sampling matches or outperforms the all-token baseline across model pairs. At very low retention, entropy-only selection begins to plateau or degrade.}
\label{fig:entropy_sweep}
\end{figure}

\subsection{Teacher Entropy Is Uninformative}
\label{app:teacher_entropy}

Teacher entropy is near-zero everywhere (mean $0.031$, std $0.055$ for Qwen3; mean $0.067$, std $0.164$ for Llama; median token probability ${\geq}0.79$), so any scheme that conditions on teacher entropy---whether for token selection, loss weighting, or sampling---receives an almost constant input and therefore adds little discriminative information. In practice, the useful axes are the \emph{student's} state ($h_t$) and the student--teacher gap ($\delta_t$), not the teacher's uncertainty.

\subsection{Isolating the Entropy Axis: Comparison with Divergence-Only Selection}
\label{app:div_only}

Section~\ref{sec:q3_signal} shows that low-entropy, high-divergence (Q3) tokens carry dense corrective signal.
A natural follow-up question is whether the \emph{low-entropy} conjunct matters at all, or whether ranking by divergence $\delta_t$ alone---ignoring student entropy---would suffice.
This is also the most direct comparison to the LATF module of AdaKD~\citep{xie2026adakd}, which selects the top-$r$\% of tokens by teacher--student Hellinger distance: our \emph{Div-only} top-$k$ selector is the same idea instantiated with reverse KL.
We deliberately give Div-only a much larger budget (top 50\%) than Q3-only (top 10\%): if the entropy axis were redundant, Div-only with $5\times$ more tokens should clearly dominate.

\paragraph{External reference point.}
The published AdaKD ablation provides an independent data point for divergence-only selection.
On Qwen2-7B$\to$1.5B (instruction-following, RKD baseline), their Table~3a reports that adding LATF alone changes ROUGE-L average from $37.03$ to $37.07$ ($+0.04$); the full +1.98 improvement of AdaKD comes from their orthogonal temperature-scaling module (IDTS), not from the token selector.
This is consistent with our Proposition~\ref{prop:proxy_bias} and motivates the present ablation in our setting.

\paragraph{Setup.}
On Qwen3-8B (GRPO) $\to$ 4B with the standard mathematical-reasoning protocol (Section~\ref{sec:exp_setup}), we compare:
\begin{itemize}
    \item \textbf{Baseline}: all tokens (100\%).
    \item \textbf{Div-only 50\%} (LATF-style hard top-$k$): top 50\% of tokens by $\delta_t$ alone---no entropy term.
    \item \textbf{Div-only 10\%}: top 10\% of tokens by $\delta_t$ alone---budget-matched to Q3-only.
    \item \textbf{Q3-only 10\%}: top 10\% of tokens by $\delta_t^{\mathrm{fwd}} \cdot (1 - \hat{h}_t)$ (Section~\ref{sec:q3_signal})---explicitly requires both low entropy and high divergence.
\end{itemize}
This design isolates two questions: (i) at $5\times$ fewer tokens, does adding the low-entropy filter beat divergence alone (Div-only 50\% vs Q3-only 10\%)? and (ii) at \emph{equal} budget, does the low-entropy filter beat pure divergence ranking (Div-only 10\% vs Q3-only 10\%)?

\begin{table}[H]
\centering
\caption{\textbf{Isolating the entropy axis: Div-only vs Q3-only} on Qwen3-8B (GRPO) $\to$ 4B (mean@16 \%, $\pm$ std). Div-only is a hard top-$k$ instantiation of LATF-style divergence selection~\citep{xie2026adakd}. We compare Q3-only at 10\% against Div-only at both $5\times$ the budget (50\%) and matched budget (10\%). Bold marks the best per benchmark.}
\label{tab:div_only}
\small
\begin{tabular}{l l cccc}
\toprule
 & & Baseline & Div-only & Div-only & Q3-only \\
 & Benchmark & 100\% & 50\% & 10\% & 10\% \\
\midrule
\multirow{3}{*}{\emph{Accuracy}}
 & MATH-500 & \textbf{76.7} $\pm$ 0.7 & 76.3 $\pm$ 0.8 & 74.3 $\pm$ 0.8 & 76.1 $\pm$ 0.9 \\
 & AIME'24  & \textbf{21.9} $\pm$ 1.2 &  21.5 $\pm$ 1.1 &  19.2 $\pm$ 1.2 & 21.5 $\pm$ 1.3 \\
 & AIME'25  & 19.4 $\pm$ 1.1 & \textbf{21.1} $\pm$ 1.2 & 15.3 $\pm$ 1.1 & 17.1 $\pm$ 1.0 \\
\bottomrule
\end{tabular}
\end{table}

\paragraph{Results.}
Table~\ref{tab:div_only} resolves both questions and sharpens what the entropy axis contributes.

\emph{(i) Equal budget (Div-only 10\% vs Q3-only 10\%).}
At the same 10\% retention, Q3-only beats Div-only on every benchmark: MATH-500 $76.1$ vs $74.3$ ($+1.8$), AIME'24 $21.5$ vs $19.2$ ($+2.3$), AIME'25 $17.1$ vs $15.3$ ($+1.8$).
Div-only at 10\% drops well below the all-token baseline ($-2.4, -2.7, -4.1$), while Q3-only stays within $0.6$ of the baseline on MATH/AIME'24.
With the divergence ranking and budget held fixed, simply requiring tokens to also be \emph{low-entropy}---i.e., re-weighting by $(1-\hat h_t)$---is what closes the gap to the baseline.

\emph{(ii) $5\times$ budget gap (Div-only 50\% vs Q3-only 10\%).}
Once Div-only is given $5\times$ the tokens (50\%), it largely catches up: MATH-500 $76.3$ vs Q3-only's $76.1$ ($+0.2$), AIME'24 a tie at $21.5$, AIME'25 $21.1$ vs $17.1$ ($+4.0$ in Div-only's favor).
This is consistent with the fact that, as $r$ grows, the high-divergence tail necessarily \emph{includes} most Q3 tokens as a side effect.
The headline is per-token efficiency: Q3-only matches Div-only on two of three benchmarks while using $5\times$ fewer tokens.

\paragraph{What this comparison does and does not say.}
We are not claiming that Q3 is numerically disjoint from a ``large-divergence'' view.
Q3 is, by construction, a high-$\delta$ region; under a sharp teacher distribution it can also overlap with high-entropy student tokens.
The point is structural rather than set-theoretic.
Prior work on token-level importance has mostly interpreted ``which tokens matter'' through the lens of student uncertainty---high entropy is taken as the proxy for an informative position---and entropy-based rules are provably blind to confident-but-wrong tokens (Proposition~\ref{prop:proxy_bias}).
What our taxonomy adds is a more fine-grained decomposition: informative tokens do not come from a single homogeneous source, and in particular there is a distinct low-entropy, high-disagreement region (Q3) that entropy-based views fail to isolate.
Table~\ref{tab:div_only} makes this concrete from two directions. Large-divergence ranking on its own is \emph{not} sufficient: Div-only at a budget matched to Q3-only is consistently weaker than the joint score, and only recovers when given a $5\times$ larger budget---i.e., when it also sweeps up much of the low-entropy tail.
Conversely, large student entropy is not the same selector as large divergence: high-$h$ tokens need not be high-$\delta$, and---as the entropy sweeps in Table~\ref{tab:entropy_sweep} show---the two selectors trace different performance curves across retention ratios.
The ``low entropy + high disagreement'' specification is therefore not a redundant restatement of either axis; it is a separate, structurally identifiable region.

\paragraph{Why ``large gradient'' is not enough.}
A natural objection is that high-divergence tokens already produce the largest pointwise gradients, so they must be the dominant source of useful signal in the full-token recipe.
That intuition does not survive scrutiny under a budget.
In standard full-token training, updates are taken over \emph{all} token positions; the vast majority typically have low or moderate divergence, with a much smaller subset of high-divergence positions.
A larger gradient on an individual high-divergence token does not by itself imply that the accumulated contribution from such tokens dominates the overall update, nor that those gradients carry the most useful signal for optimization.
More importantly, the question addressed here is not whether a token has a large gradient, but whether that gradient is \emph{informative and worth keeping under a limited budget}.
The Div-only 10\% column is exactly this stress test: when forced to commit the entire budget to the highest-divergence tail, the resulting model underperforms even the all-token baseline.
The structured comparison---separating large-divergence-only from low-entropy-and-large-divergence---turns the coarse intuition that ``large-divergence tokens matter'' into a more specific and testable claim about which structurally distinct token types are actually informative under budgeted training, and identifies Q3 as the region where this distinction matters most.

\emph{Connecting back to AdaKD.}
The Div-only 50\% column is the within-our-setup analogue of AdaKD's published LATF-only ablation~\citep{xie2026adakd}: a hard top-$r$\% divergence selector with no entropy interaction.
Their Table~3a reports $+0.04$ ROUGE-L for LATF alone on Qwen2-1.5B; we observe a similarly small effect on average ($-0.4$ on MATH-500, $-0.4$ on AIME'24, $+1.7$ on AIME'25 relative to the all-token baseline).
The takeaway is the same in both settings: divergence ranking on its own is roughly baseline-equivalent at large budgets and clearly worse at small budgets, while a low-entropy + high-divergence selector is competitive at both---consistent with Proposition~\ref{prop:proxy_bias} and Remark~\ref{rem:fix}.

\subsection{Agentic Planning: Held-Out Queries}
\label{app:agent_detail}

\begin{figure}[H]
\centering
\includegraphics[width=\textwidth]{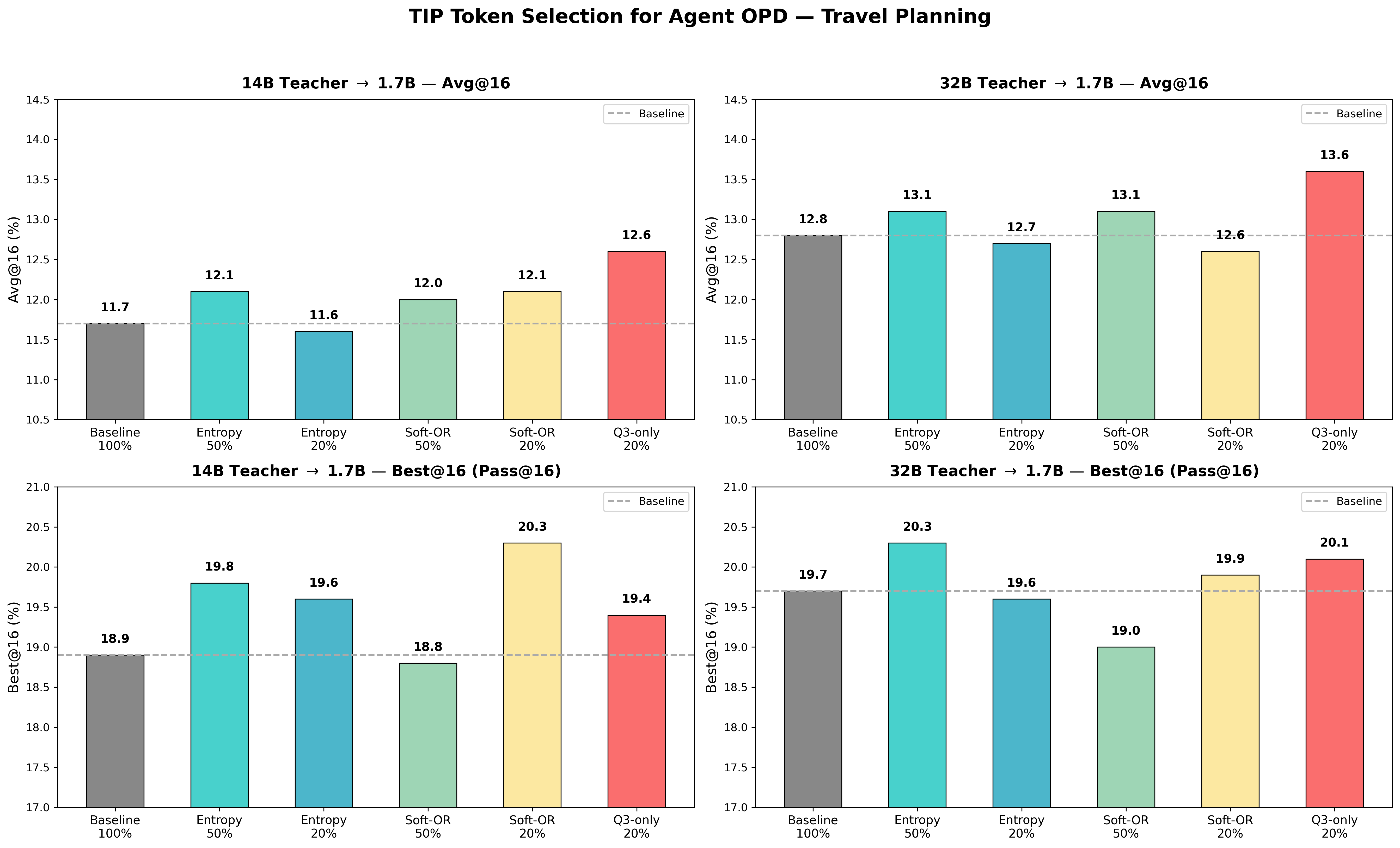}
\caption{\textbf{Token selection for agentic OPD on 20\% held-out travel-planning queries.}
\emph{Top row}: Avg@16; \emph{Bottom row}: Best@16 (Pass@16).
Within each row the left panel uses the 14B teacher and the right panel uses the 32B teacher.
Q3-only 20\% matches or exceeds the full-token baseline in every setting, consistent with Table~\ref{tab:agentic_results}.
Best@16 results show the same pattern: overconfident-token training improves the upper tail of performance, not just the mean.}
\label{fig:agent_detail}
\end{figure}

Figure~\ref{fig:agent_detail} complements Table~\ref{tab:agentic_results} with a finer-grained view.
The Avg@16 panels confirm the main-text findings: Q3-only 20\% leads for both teacher sizes (12.6 and 13.6 vs.\ baselines of 11.7 and 12.8), and entropy-only 50\% improves over full-token OPD.
The Best@16 (Pass@16) panels show the same pattern: Soft-OR 20\% achieves the highest Best@16 with the 14B teacher (20.3 vs.\ 18.9 baseline), while Q3-only 20\% leads with the 32B teacher (20.1 vs.\ 19.7).
This indicates that correcting overconfident tokens improves not just average performance but also the upper tail.

\subsection{Hyperparameters}
\label{app:hyperparameters}

Table~\ref{tab:hyperparameters} summarizes all training hyperparameters.

\begin{table}[H]
\centering
\caption{\textbf{Training hyperparameters across model pairs.}}
\label{tab:hyperparameters}
\begin{tabular}{l ccc}
\toprule
 & Qwen3 (8B$\to$4B) & Llama (70B$\to$8B) & Qwen2.5 (14B$\to$1.5B) \\
\midrule
Optimizer & AdamW & AdamW & AdamW \\
Learning rate & $1 \times 10^{-6}$ & $3 \times 10^{-7}$ & $1 \times 10^{-6}$ \\
Batch size (rollouts) & 8 & 8 & 8 \\
Rollouts per prompt & 16 & 16 & 16 \\
Max response length & 8192 & 8192 & 8192 \\
Max prompt length & 2048 & 2048 & 2048 \\
OPD chunk size & 512 & 512 & 512 \\
OPD max length & 16{,}384 & 16{,}384 & 16{,}384 \\
Tensor parallel size & 2 & 2 & 2 \\
Generation temperature & 1.0 & 1.0 & 1.0 \\
Top-$p$ (generation) & 1.0 & 1.0 & 1.0 \\
GPUs & 8$\times$H200 & 8$\times$H200 & 4$\times$H200 \\
\bottomrule
\end{tabular}
\end{table}

For entropy sampling experiments, tokens are sampled with probability $p_t \propto h_t$, with the retention ratio determining how many tokens are kept.
For Top-K experiments, the top-$\rho$ fraction of tokens by score is selected deterministically.
All evaluations use mean@16 (average accuracy over 16 independent samples per problem) with temperature 1.0.

\subsection{Qualitative Examples Across Quadrants}
\label{app:qualitative}

We present five representative tokens from training on Qwen2.5-14B $\to$ 1.5B, spanning different quadrants of the taxonomy.
Each example shows the problem, the student's response with the \colorbox{red!20}{target token highlighted}, and the student and teacher top-5 distributions.
The entropy values reported in this appendix are raw entropies in nats.

\paragraph{Example 1: Generic variable vs.\ concrete substitution (\textbf{Q3}: low entropy, high divergence).}
Student entropy: $0.02$ (extremely confident). Forward KL: $5.27$. Overconfidence score: $5.24$.

\begin{tcolorbox}[colback=gray!5, colframe=black, boxrule=0.5pt, title={Problem}]
Let $\mathcal{P}$ be a set of monic polynomials with integer coefficients of the least degree, with root $k \cdot \cos\!\left(\frac{4\pi}{7}\right)$, as $k$ spans over the positive integers. Let $P(x) \in \mathcal{P}$ be the polynomial so that $|P(1)|$ is minimized. Find the remainder when $P(2017)$ is divided by $1000$.
\end{tcolorbox}

\begin{tcolorbox}[colback=blue!3, colframe=black, boxrule=0.5pt, title={Student response (excerpt)}]
\small
\ldots The polynomial with roots of the form $\colorbox{red!20}{\textbf{k}} \cdot \cos\!\left(\frac{4\pi}{7}\right)$ can be constructed by considering the minimal polynomial of $\cos\!\left(\frac{4\pi}{7}\right)$\ldots
\end{tcolorbox}

\begin{tcolorbox}[colback=yellow!5, colframe=black, boxrule=0.5pt, title={Token distributions}]
\small
\textbf{Student top-5}: \texttt{k}~(99.8\%), \texttt{x}~(0.1\%), \texttt{e}~(0.0\%), \texttt{a}~(0.0\%), \texttt{z}~(0.0\%) \\
\textbf{Teacher top-5}: \texttt{2}~(49.9\%), \texttt{k}~(30.3\%), \texttt{x}~(8.7\%), \texttt{e}~(6.8\%), \texttt{z}~(2.8\%)
\end{tcolorbox}

\noindent\textit{Analysis.}
The student assigns 99.8\% to \texttt{k}, mechanically repeating the generic variable from the problem statement, while the teacher places 49.9\% on the concrete value \texttt{2}---indicating that the next reasoning step should substitute a specific integer rather than restate the variable.
An entropy-only rule would assign near-zero weight to this position because $h_t = 0.02$, yet it carries one of the densest corrective signals in the batch (overconfidence score $= 5.24$).

\paragraph{Example 2: Reasoning fork---restating vs.\ advancing (\textbf{Q1}: high entropy, high divergence).}
Student entropy: $1.82$ (uncertain). Forward KL: $5.31$.

\begin{tcolorbox}[colback=gray!5, colframe=black, boxrule=0.5pt, title={Problem}]
An arithmetic sequence of positive integers has $n \ge 3$ terms, initial term $a$, and common difference $d > 1$. Carl wrote down all the terms correctly except for one term, which was off by $1$. The sum of the terms he wrote was $222$. What is $a + d + n$?
\end{tcolorbox}

\begin{tcolorbox}[colback=blue!3, colframe=black, boxrule=0.5pt, title={Student response (excerpt)}]
\small
\ldots Given $a_i = a + k$ where $k = 1$ if the term $a + kd$ (for some integer $k$) was \colorbox{red!20}{\textbf{off}} and $1$ if the term $a + (k+1)d$ was off.\ldots
\end{tcolorbox}

\begin{tcolorbox}[colback=yellow!5, colframe=black, boxrule=0.5pt, title={Token distributions}]
\small
\textbf{Student top-5}: \texttt{off}~(54.4\%), \texttt{missing}~(15.6\%), \texttt{the}~(9.5\%), \texttt{not}~(4.5\%), \texttt{replaced}~(3.1\%) \\
\textbf{Teacher top-5}: \texttt{written}~(40.3\%), \texttt{increased}~(35.6\%), \texttt{off}~(7.9\%), \texttt{decreased}~(2.9\%), \texttt{incremented}~(2.0\%)
\end{tcolorbox}

\noindent\textit{Analysis.}
The student favors ``off'' (54.4\%), restating the problem, while the teacher prefers ``written'' (40.3\%) or ``increased'' (35.6\%)---words that advance the solution by characterizing the error direction.
The teacher's distribution pushes toward more precise mathematical reasoning, while the student's choice leads to circular rephrasing.
This is a classic Q1 token: the student is uncertain (entropy $= 1.82$) and the teacher strongly disagrees---entropy-based selection \emph{would} catch this token.

\paragraph{Example 3: Arithmetic computation error (\textbf{Q3}: low entropy, high divergence).}
Student entropy: $0.40$. Forward KL: $3.54$. Overconfidence score: $3.27$.

\begin{tcolorbox}[colback=gray!5, colframe=black, boxrule=0.5pt, title={Problem}]
Find the sum of the real roots of $f(x) = x^4 + 9x^3 + 18x^2 + 18x + 4$.
\end{tcolorbox}

\begin{tcolorbox}[colback=blue!3, colframe=black, boxrule=0.5pt, title={Student response (excerpt)}]
\small
\ldots $f(2) = 2^4 + 9 \cdot 2^3 + 18 \cdot 2^2 + 18 \cdot 2 + 4 = 16 + 72 + 72 + 36 + 4 = 19\colorbox{red!20}{\textbf{0}} \neq 0$\ldots
\end{tcolorbox}

\begin{tcolorbox}[colback=yellow!5, colframe=black, boxrule=0.5pt, title={Token distributions}]
\small
\textbf{Student top-5}: \texttt{0}~(91.2\%), \texttt{8}~(4.0\%), \texttt{2}~(3.5\%), \texttt{4}~(0.7\%), \texttt{6}~(0.5\%) \\
\textbf{Teacher top-5}: \texttt{6}~(60.0\%), \texttt{2}~(25.0\%), \texttt{8}~(6.3\%), \texttt{9}~(3.8\%), \texttt{0}~(3.0\%)
\end{tcolorbox}

\noindent\textit{Analysis.}
The student confidently writes $f(2) = 190$ with 91.2\% on digit \texttt{0}, committing an arithmetic error in summing $16 + 72 + 72 + 36 + 4$.
The teacher distributes probability across \texttt{6}~(60.0\%) and \texttt{2}~(25.0\%), clearly disagreeing with \texttt{0}.
Like Example~1, this is a Q3 token: the student is fairly confident ($h_t = 0.40$), yet the teacher strongly disagrees---entropy-based selection would under-weight this position.

\paragraph{Example 4: Confident on the wrong variable (\textbf{Q3}: low entropy, high divergence).}
Student entropy: $0.12$ (very confident). Forward KL: $5.58$. Overconfidence score: $5.44$.

\begin{tcolorbox}[colback=gray!5, colframe=black, boxrule=0.5pt, title={Problem}]
If the three interior angles $A, B, C$ of triangle $\triangle ABC$ have cotangents $\cot A, \cot B, \cot C$ that form an arithmetic sequence, and the maximum value of angle $B$ can be expressed as $\frac{m\pi}{n}$, find the value of $m + n$.
\end{tcolorbox}

\begin{tcolorbox}[colback=blue!3, colframe=black, boxrule=0.5pt, title={Student response (excerpt)}]
\small
\ldots Next, consider the cotangent of each angle $\cot B = \frac{\sin \colorbox{red!20}{\textbf{B}}}{\cos B}$, based on the arithmetic sequence property that $\cot B - \cot A$ and $\cot B - \cot C$ are equal\ldots
\end{tcolorbox}

\begin{tcolorbox}[colback=yellow!5, colframe=black, boxrule=0.5pt, title={Token distributions}]
\small
\textbf{Student top-5}: \texttt{B}~(98.2\%), \texttt{(B}~(1.0\%), \texttt{\textbackslash'}~(0.2\%), \texttt{\^{}}~(0.2\%), \texttt{C}~(0.1\%) \\
\textbf{Teacher top-5}: \texttt{A}~(52.4\%), \texttt{(A}~(11.7\%), \texttt{C}~(11.7\%), \texttt{B}~(6.3\%), \texttt{(}~(6.3\%)
\end{tcolorbox}

\noindent\textit{Analysis.}
The student assigns 98.2\% to \texttt{B}, while the teacher prefers \texttt{A} (52.4\%).
The student writes $\cot B = \frac{\sin B}{\cos B}$, which is mathematically incorrect ($\cot B = \frac{\cos B}{\sin B}$); the teacher's preferred continuation reflects a different and more correct reasoning path.
Another clear Q3 token: $h_t = 0.12$ would be invisible to entropy-only selection.

\smallskip\noindent\emph{Note.} The original rollout is in Chinese; the problem and response excerpts above are translated for readability. The token identifiers (\texttt{B}, \texttt{A}, etc.) are unchanged from the raw log, as mathematical symbols are language-invariant.

\paragraph{Example 5: Wrong mathematical symbol (\textbf{Q1}: moderate entropy, high divergence).}
Student entropy: $1.38$. Forward KL: $4.27$.

\begin{tcolorbox}[colback=gray!5, colframe=black, boxrule=0.5pt, title={Problem}]
(Same as Example 4)
\end{tcolorbox}

\begin{tcolorbox}[colback=blue!3, colframe=black, boxrule=0.5pt, title={Student response (excerpt)}]
\small
\ldots Assume the common difference between two adjacent terms is equal, i.e., $b - a = \backslash$\colorbox{red!20}{\textbf{Delta}} (where $\Delta$ is a constant)\ldots
\end{tcolorbox}

\begin{tcolorbox}[colback=yellow!5, colframe=black, boxrule=0.5pt, title={Token distributions}]
\small
\textbf{Student top-5}: \texttt{text}~(53.5\%), \texttt{Delta}~(25.3\%), \texttt{frac}~(13.5\%), \texttt{lambda}~(1.4\%), \texttt{pm}~(1.0\%) \\
\textbf{Teacher top-5}: \texttt{cot}~(91.7\%), \texttt{frac}~(4.6\%), \texttt{text}~(1.1\%), \texttt{delta}~(1.0\%), \texttt{Delta}~(0.8\%)
\end{tcolorbox}

\noindent\textit{Analysis.}
The teacher assigns 91.7\% to \texttt{cot}---the mathematically relevant function for this problem---while the student splits probability across irrelevant symbols (\texttt{text}, \texttt{Delta}, \texttt{frac}).
This is a Q1 token (moderate entropy, high divergence): unlike Q3 tokens, entropy-based selection \emph{would} catch it, but the teacher's near-deterministic preference for \texttt{cot} makes it an especially high-value training signal.
Note that Examples~4 and~5 come from the \emph{same student rollout} at different token positions (same Chinese-language response as Ex.~4; translated above), illustrating how a single response can contain both Q3 and Q1 tokens.

\medskip
\noindent\textbf{Summary.}
Examples 1, 3, and 4 illustrate why divergence is needed: the student's entropy is low, so any entropy-only rule would skip these tokens, yet the teacher strongly disagrees---a generic variable where a concrete value is needed (Ex.~1), an arithmetic error (Ex.~3), or a confident variable-level misstep in a derivation (Ex.~4).
Examples 2 and 5 show Q1 tokens that entropy-based selection handles well---the student is already uncertain, and the teacher provides a clear corrective signal.
The taxonomy's value is precisely that it identifies \emph{both} regions as informative while distinguishing them from Q4 (solved) tokens where both entropy and divergence are low.

\end{document}